\title{\LARGE \bf
A Policy Gradient Approach for Finite Horizon Constrained Markov Decision Processes
}
\author{Soumyajit Guin$^{1}$ and Shalabh Bhatnagar$^{1}$% <-this % stops a space
\thanks{*This work was supported by a J.~C.~Bose Fellowship, Project No.~DFTM/02/3125/M/04/AIR-04 from DRDO under DIA-RCOE, a project from DST-ICPS, and the RBCCPS, IISc.}% <-this % stops a space
\thanks{$^{1}$ The authors are with the Department of Computer Science and Automation, Indian Institute of Science, Bangalore 560012, India
        {\tt\small gsoumyajit@iisc.ac.in; shalabh@iisc.ac.in.}}%
%\thanks{$^{2}$Bernard D. Researcheris with the Department of Electrical Engineering, Wright State University,
       % Dayton, OH 45435, USA
       % {\tt\small b.d.researcher@ieee.org}}%
%
}
\newtheorem{assumption}{Assumption}
\newtheorem{remark}{Remark}
\newtheorem{theorem}{Theorem}
\begin{document}

\maketitle
\thispagestyle{empty}
\pagestyle{empty}

%%%%%%%%%%%%%%%%%%%%%%%%%%%%%%%%%%%%%%%%%%%%%%%%%%%%%%%%%%%%%%%%%%%%%%%%%%%%%%%%
\begin{abstract}

The infinite horizon setting is widely adopted for problems of reinforcement learning (RL). These invariably result in stationary policies that are optimal. In many situations, finite horizon control problems are of interest and for such problems, the optimal policies are time-varying in general. Another setting that has become popular in recent times is of Constrained Reinforcement Learning, where the agent maximizes its rewards while it also aims to satisfy some given constraint criteria. However, this setting has only been studied in the context of infinite horizon MDPs where stationary policies are optimal. We present an algorithm for constrained RL in the Finite Horizon Setting where the horizon terminates after a fixed (finite) time. We use function approximation in our algorithm which is essential when the state and action spaces are large or continuous and use the policy gradient method to find the optimal policy. The optimal policy that we obtain depends on the stage and so is non-stationary in general. To the best of our knowledge, our paper presents the first policy gradient algorithm for the finite horizon setting with constraints. We show the convergence of our algorithm to a constrained optimal policy. We also compare and analyze the performance of our algorithm through experiments and show that our algorithm performs better than some other well known algorithms.
\end{abstract}

%%%%%%%%%%%%%%%%%%%%%%%%%%%%%%%%%%%%%%%%%%%%%%%%%%%%%%%%%%%%%%%%%%%%%%%%%%%%%%%%
\section{Introduction}

The Constrained Markov Decision Process (C-MDP) setting has recently received significant attention in the reinforcement learning (RL) literature due to its natural application in safe RL problems \cite{achiam2017,garcia2015}. A textbook treatment of C-MDP can be found in \cite{altman1999}. In the C-MDP framework, in addition to the long-term objective specified via single-stage rewards (that are associated with state transitions), there are also long-term constraint functions specified via additional single-stage rewards or costs. The goal then is to find an optimal policy that maximizes a long-term reward objective while satisfying prescribed constraints. RL algorithms for infinite horizon discounted C-MDP have been studied in \cite{geibel2006}. For the long-run average cost C-MDP, \cite{borkar2005} has developed the first actor-critic RL algorithm in the full state setting. RL Algorithms with function approximation have also  been developed for infinite horizon discounted cost C-MDP \cite{bhatnagar2010} as well as the long-run average cost C-MDP \cite{bhatnagar2012}.

In this paper, we present an RL algorithm for C-MDP in the finite horizon setting. Finite Horizon problems \cite{bertsekas2005,bertsekas1996} deal with situations where the agent needs to choose a finite number of actions depending on the states of the environment in order to maximize the expected sum of single-stage and the terminal reward. An optimal policy in this setting would in general be non-stationary as the choice of an action at an instant would depend not just on the state at that instant but also on the number of actions remaining to be chosen from then on so as to maximize a long-term objective. RL techniques for Finite Horizon (regular) MDP in the full state case have been discussed in \cite{garcia1998}. They give two algorithms for the tabular case: $Q_\mathcal{H}$-Learning and $R_\mathcal{H}$-Learning, and do a learning-rate analysis between them. Actor-critic type algorithms for Finite-Horizon (regular) MDPs are discussed in \cite{bhatnagar2008}. They give  four algorithms for Finite Horizon, among which three are for tabular setting, and one algorithm uses function approximation. Their algorithms for tabular setting are not scalable to large state spaces. The algorithm which uses function approximation uses zeroth order gradient instead of first order gradient, like us and also doesn't consider inequality constraints, which we do. Our RL algorithm is devised for finite-horizon C-MDP, uses function approximation, and involves actor-critic type updates. Temporal difference learning algorithms for a finite horizon setting have also recently been studied in \cite{deasis2020}. They show convergence for Q-Learning with Linear function-approximation and some general function approximations for Finite Horizon. With their motivation to make infinite horizon Q-Learning stable, they do not consider time varying transition probability and reward functions, which we do, which is a more complicated setting. 

We prove the convergence of our algorithm under standard assumptions. Our convergence guarantees gives one the power to mimic our algorithm and make algorithms using neural networks for solving complex tasks. One such task could be a portfolio management system \cite{jiang2017deep}, where a person wants to invest in the stock market for a finite amount of time. The person here needs to decide the ratio of the money they will invest in different stocks. The stock values naturally change with time, and the decisions made are time critical in nature. We also show in this paper, empirical results on a two-dimensional grid world problem using our algorithms where we observe that our algorithms clearly meet the constraint cost performance while giving a good reward performance. The other algorithms in the literature do not meet the constraint objective.  

\subsection*{Our contributions: } 
\begin{enumerate}
    \item 
We present and analyze both theoretically and experimentally the first policy gradient reinforcement learning algorithm with function approximation for Finite Horizon Constrained Markov Decision Processes. 
\item This setting differs significantly from infinite horizon problems since in the latter stationary policies are optimal unlike the finite horizon setting where invariably non-stationary policies are optimal. This is because knowledge of the time remaining for termination of the horizon often has a profound bearing on action selection in the finite horizon case.
\item We prove that our proposed algorithm converges almost surely to a constrained optimum over a tuple of parameters, one corresponding to each instant in the horizon.
\item We show a comparison of the empirical performance of our algorithm in relation to well known algorithms in the literature originally designed for infinite horizon problems. 
\item Our key observation here is that our algorithm gives a good reward performance while strictly meeting the constraint criterion at every time instant unlike the other algorithms that do meet the constraint criterion and are therefore unsuitable for Constrained Finite Horizon problems.
\end{enumerate}

\section{The framework and problem formulation}
\label{framework}

By a Markov Decision Process (MDP), we mean a stochastic process \(\{X_h\}\) taking values in a set (called the state space) $S$, and governed by a control sequence \(\{Z_h\}\) taking values in a set $A$ (called the action space), that satisfies the following controlled Markov property: \(P(X_{h+1}=s'|X_m,Z_m,m \leq h)=p_h(X_h,Z_h,s')\mbox{ }\) a.s. Also, associated with any state transition is a single-stage reward. We assume here that $S$ and $A$ are finite sets. Actions are chosen by the decision maker at instants \(h=0,1,\dots,H-1\), and the process terminates at instant \(H\), where $H$ is a given positive integer. For simplicity, we consider all actions in $A$ to be feasible in every state in $S$. A non-stationary randomized policy (NRP) is a set of \(H\) distributions defined by \(\pi:=\{\mu_0,\mu_1,...,\mu_{H-1}\}\), where \(\mu_h(s,\cdot)\) is a distribution over \(A,s \in S, h=0,1,\dots,H-1\). Here, the action \(Z_h \sim \mu_h(X_h,\cdot), \forall h\). It is easy to see under a NRP, \(\{X_h\}\) is a non-homogeneous Markov chain.

Let \(r_h(s,a,s')\) (resp.~\(g_h^{(1)}(s,a,s'),\dots,g_h^{(M)}(s,a,s')\)) be the single-stage reward (resp.~the set of single-stage costs) at instant \(h=0,1,\dots,H-1\) when the state is \(s \in S\), the action chosen is \(a \in A\) and the next state is \(s' \in S\). Let \(r_H(s)\) (resp. \(g^{(1)}_H(s),\ldots, g^{(M)}_H(s)\)) likewise denote the terminal reward (resp.~set of terminal costs) at instant \(H\) when the terminal state is \(s \in S\). Our aim is to find a NRP \(\pi^*\) that maximizes the following over all NRP \(\pi\):

\begin{equation}\label{e1}
J(\pi)=\mathbb{E}\left[r^{(k)}_H(X_H)+\sum_{h=0}^{H-1}r^{(k)}_h(X_h,Z_h,X_{h+1})\right],
\end{equation}

subject to the constraints

\begin{equation}\label{e2}
\begin{split}
S^{(k)}(\pi)=\mathbb{E}\big[g^{(k)}_H(X_H)+\sum_{h=0}^{H-1}g^{(k)}_h(X_h,Z_h,X_{h+1})\big]\leq \alpha^{(k)},
\end{split}
\end{equation}
\(k=1,\dots,M\). Here \(\alpha^{(1)},\dots,\alpha^{(M)}\) are certain prescribed threshold values. We assume there exists at least one NRP \(\pi\) for which all the inequality constraints in (\ref{e2}) are satisfied. Let \(\lambda=(\lambda^{(1)},\dots,\lambda^{(M)})^T\) denote the vector of Lagrange multipliers \(\lambda^{(1)},\dots,\lambda^{(M)} \in \mathbb{R}^{+} \cup \{0\}\) and let \(L(\pi,\lambda)\) denote the Lagrangian:

\begin{equation}
\begin{split}
L(\pi,\lambda)&=J(\pi)+\sum_{k=1}^M \lambda^{(k)} (S^{(k)}(\pi)-\alpha^{(k)})=\mathbb{E}\left[c^{\lambda}_H(X_H)+\sum_{h=0}^{H-1}c^{\lambda}_h(X_h,Z_h,X_{h+1})\right],
\end{split}
\end{equation}
where \(c^{\lambda}_H(s):=r_H(s)+\sum_{k=1}^M \lambda^{(k)} (g^{(k)}_H(s)-\alpha^{(k)})\) and \(c^{\lambda}_h(s,a,s'):=r_h(s,a,s')+\sum_{k=1}^M \lambda^{(k)} g^{(k)}_h(s,a,s')\) respectively. Let \(V^{\pi,\lambda}_h(s),s \in S\) be the value function at instant \(h=0,1,\dots,H-1\) for the relaxed MDP problem.
\begin{equation}
\begin{split}
V^{\pi,\lambda}_h(s)=\mathbb{E}\big[c^{\lambda}_H(X_H)+\sum_{t=h}^{H-1}c^{\lambda}_t(X_t,Z_t,X_{t+1})|X_h=s\big]
\end{split}
\end{equation} 
and similarly,
\begin{equation}
V^{\pi,\lambda}_H(s)=c^{\lambda}_H(s).
\end{equation}

Note that the foregoing can be written in terms of the Q-value function for \(s \in S,a \in A\) as
\begin{equation}
V^{\pi,\lambda}_h(s)=\sum_{a \in A} \mu_h(s,a) Q^{\pi,\lambda}_h(s,a)
\end{equation}
where,
\[
Q^{\pi,\lambda}_h(s,a)=\sum_{s' \in S} p_h(s,a,s')\left[c^{\lambda}_h(s,a,s')+V^{\pi,\lambda}_{h+1}(s')\right].
\]
Similarly we can define value function for the constraints. The value function for \(s \in S\) of the $k$th constraint, \(k=1,\dots,M,\) is defined as:
\begin{equation}
\begin{split}
W^{\pi,(k)}_h(s)=\sum_{a \in A} \mu_h(s,a) \sum_{s' \in S} p_h(s,a,s') \big[g_h(s,a,s')+W^{\pi,(k)}_{h+1}(s')\big]
\end{split}
\end{equation}
and,
\begin{equation}
W^{\pi,(k)}_H(s)=g^{(k)}_H(s)-\alpha^{(k)}.
\end{equation}
Let \(\beta(s_0),s_0 \in S\) be the initial state distribution. For an NRP \(\pi\), let \(Pr(s_0 \rightarrow s,h,\pi)\) be the probability of transitioning from initial state \(s_0\) to state $s$ in $h$ steps under policy \(\pi\). Also, under \(\pi\), let \(d^{\pi}_h(s)\) be the probability of reaching state \(s\) in \(h\) steps, \(h=0,1,\dots,H\).
\begin{equation}
\label{dpi}
\begin{split}
&d^{\pi}_h(s):=\sum_{s_0 \in S} \beta(s_0) Pr(s_0 \rightarrow s,h,\pi)=\sum_{s_0 \in S} \beta(s_0) \sum_{a_0 \in A}\mu_0(s_0,a_0) \sum_{s_1 \in S} p_0(s_0,a_0,s_1)\\
&\sum_{a_1 \in A}\mu_1(s_1,a_1)\sum_{s_2 \in S} p_1(s_1,a_1,s_2) \dots\sum_{a_{h-1} \in A}\mu_{h-1}(s_{h-1},a_{h-1}) p_{h-1}(s_{h-1},a_{h-1},s).
\end{split}
\end{equation}
We consider here a parameterized class of NRP \(\pi\) where \(\mu_h\) depends on parameter \(\theta_h \in \mathbb{R}^{y_h}\), \(h=0,1,\dots,H-1\). Let \(\theta\stackrel{\triangle}{=} (\theta_0,\theta_1,\ldots,\theta_{H-1})^T\). Let \(\pi_{\theta}=\{\mu_{\theta_h}(s,a), s \in S, a \in A, \theta_h \in \mathbb{R}^{y_h}, h=0,\dots,H-1\}\) denote the parameterized class of NRP. Our goal then is to find an optimal \(\theta^*\) that maximizes (\ref{e1}) while satisfying the constraints (\ref{e2}). We will use \(\pi\) and \(\theta\) interchangeably henceforth.

\begin{assumption}
\label{a1}
The policies \(\pi_{\theta} =\{\mu_{\theta_h},h=0,1,\ldots,H-1\}\) are twice continuously differentiable functions of \(\theta=(\theta_0,\theta_1,\ldots,\theta_{H-1})^T\). Moreover, \(\mu_{\theta_h}(s,a) >0\), \(\forall h=0,1,\ldots,H-1, s\in S, a\in A\).
\end{assumption}

\begin{remark}
\label{r1}
Assumption~\ref{a1} is seen to be satisfied for example by the Gibbs distribution, see Proposition 11 of \cite{lakshminarayanan2017}. From the expression of $d^\pi_h(s)$ in \eqref{dpi}, under the setting of parameterized policies, it is easy to see under Assumption ~\ref{a1} that \(d^{\theta}_h(s)\) is continuously differentiable,  $\forall h=0,1,\ldots,H$.
\end{remark}

We have the following preliminary result along the lines of the policy gradient theorem for infinite horizon problems (cf.~Chapter 13 of \cite{sutton2018}).

\begin{theorem}
(The Policy Gradient Theorem for Finite Horizon Constrained MDPs).
\label{policy-gradient-theorem}
Under Assumption~\ref{a1}, for any baseline function \(b:S\rightarrow\mathbb{R}\), we have
\[
\begin{split}
\nabla_\theta L(\pi,\lambda) = &(\sum_{s \in S} d^{\pi}_h(s)\sum_{a \in A} \nabla_{\theta_h} \mu_h(s,a)( Q^{\pi,\lambda}_h(s,a) - b(s)),h=0,1,\dots,H-1)^T.
\end{split}
\]
\end{theorem}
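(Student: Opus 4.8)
The plan is to compute each component $\nabla_{\theta_h} L(\pi,\lambda)$ separately for fixed $h \in \{0,1,\dots,H-1\}$, since $\nabla_\theta L$ is simply the stacking of these components into the stated tuple. First I would observe that $L(\pi,\lambda) = \sum_{s_0 \in S}\beta(s_0) V^{\pi,\lambda}_0(s_0)$, so it suffices to differentiate the stage-$0$ value function and average over the initial distribution. The structural fact that makes the finite-horizon case clean is that $\mu_{\theta_t}$ depends only on $\theta_t$; hence $\nabla_{\theta_h}\mu_t(s,a)=0$ whenever $t\neq h$. Under Assumption~\ref{a1} all the quantities involved are continuously differentiable (cf.~Remark~\ref{r1}), so interchanging $\nabla_{\theta_h}$ with the finite sums is justified.

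The heart of the argument is a case analysis of $\nabla_{\theta_h} V^{\pi,\lambda}_t(s)$ in the stage index $t$. For $t>h$, the value function $V^{\pi,\lambda}_t$ depends only on $\mu_t,\dots,\mu_{H-1}$, none of which involves $\theta_h$, so $\nabla_{\theta_h} V^{\pi,\lambda}_t(s)=0$. At $t=h$, applying the product rule to $V^{\pi,\lambda}_h(s)=\sum_a \mu_h(s,a)Q^{\pi,\lambda}_h(s,a)$ and noting that $Q^{\pi,\lambda}_h(s,a)$ is built from $p_h$, $c^{\lambda}_h$ and $V^{\pi,\lambda}_{h+1}$ (the last having zero $\theta_h$-gradient by the previous case) yields $\nabla_{\theta_h} V^{\pi,\lambda}_h(s)=\sum_a \nabla_{\theta_h}\mu_h(s,a)Q^{\pi,\lambda}_h(s,a)$. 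For $t<h$ the policy term drops out and only the transition term survives, giving the ``pass-through'' recursion $\nabla_{\theta_h} V^{\pi,\lambda}_t(s)=\sum_a \mu_t(s,a)\sum_{s'} p_t(s,a,s')\nabla_{\theta_h} V^{\pi,\lambda}_{t+1}(s')$.

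I would then unroll this recursion from $t=0$ down to $t=h$. Each step folds one more layer of $\mu_t$ and $p_t$ into an accumulating $h$-step transition kernel, and comparing the resulting product with the definition \eqref{dpi} identifies it as exactly $Pr(s_0\rightarrow s,h,\pi)$. Averaging over $\beta(s_0)$ then produces $d^{\pi}_h(s)$, yielding $\nabla_{\theta_h} L(\pi,\lambda)=\sum_{s\in S} d^{\pi}_h(s)\sum_a \nabla_{\theta_h}\mu_h(s,a)Q^{\pi,\lambda}_h(s,a)$. Finally, the baseline is free because $\sum_a \mu_h(s,a)=1$ forces $\sum_a \nabla_{\theta_h}\mu_h(s,a)=0$, so $\sum_a \nabla_{\theta_h}\mu_h(s,a)b(s)=b(s)\cdot 0=0$ for any $b:S\to\mathbb{R}$; subtracting this zero term gives the stated form. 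The main obstacle is the bookkeeping in the unrolling step and matching the accumulated kernel to the definition of $d^{\pi}_h$; unlike the infinite-horizon policy gradient theorem, there are no convergence-of-infinite-series issues here because the horizon is finite, so the telescoping is a purely finite induction.
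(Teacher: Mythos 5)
Your proposal is correct and follows essentially the same route as the paper's own proof: differentiate $V^{\pi,\lambda}_0$, unroll the resulting recursion until stage $h$ to identify the accumulated kernel with $Pr(s_0\rightarrow s,h,\pi)$ and hence $d^{\pi}_h$, then remove the baseline via $\sum_{a}\nabla_{\theta_h}\mu_h(s,a)=0$. Your explicit case analysis in $t$ versus $h$ (and the explicit justification that the baseline term vanishes) merely spells out what the paper compresses into ``unrolling the above expression repeatedly,'' so there is no substantive difference.
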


\begin{proof}
Notice first that \(\nabla_\theta L(\pi,\lambda) = (\nabla_{\theta_h} L(\pi,\lambda), h=0,1,\ldots,H-1)^T\). Now observe that
\[
\begin{split}
\nabla_{\theta_h} V^{\pi,\lambda}_0(s_0)=\nabla_{\theta_h} \left(\sum_{a_0 \in A} \mu_0(s_0,a_0) Q^{\pi,\lambda}_0(s_0,a_0)\right)
\end{split}
\]
\[
\begin{split}
=\sum_{a_0 \in A} \big[\nabla_{\theta_h} \mu_0(s_0,a_0) Q^{\pi,\lambda}_0(s_0,a_0)+\mu_0(s_0,a_0) \nabla_{\theta_h} Q^{\pi,\lambda}_0(s_0,a_0) \big]
\end{split}
\]
\[
\begin{split}
=\sum_{a_0 \in A} \left[\nabla_{\theta_h} \mu_0(s_0,a_0) Q^{\pi,\lambda}_0(s_0,a_0)+\mu_0(s_0,a_0) \nabla_{\theta_h} \sum_{s_1 \in S} p_0(s_0,a_0,s_1)\left(c^{\lambda}_0(s_0,a_0,s_1)+V^{\pi,\lambda}_1(s_1)\right) \right]
\end{split}
\]
\[
\begin{split}
=\sum_{a_0 \in A} \big[\nabla_{\theta_h} \mu_0(s_0,a_0) Q^{\pi,\lambda}_0(s_0,a_0)+\mu_0(s_0,a_0) \sum_{s_1 \in S} p_0(s_0,a_0,s_1) \nabla_{\theta_h} V^{\pi,\lambda}_1(s_1) \big].
\end{split}
\]
Unrolling the above expression repeatedly  we get:
\begin{equation}
\begin{split}
\nabla_{\theta_h} V^{\pi,\lambda}_0(s_0)=&\sum_{s \in S} Pr(s_0 \rightarrow s,h,\pi)\sum_{a \in A} \nabla_{\theta_h} \mu_h(s,a)Q^{\pi,\lambda}_h(s,a).
\end{split}
\end{equation}
Now the gradient of the Lagrangian \(L(\pi,\lambda)\) w.r.t \(\theta_h\) can be seen to be
\begin{equation}\label{gradient}
\begin{split}
\nabla_{\theta_h} L(\pi,\lambda)=&\sum_{s \in S} \beta(s) \nabla_{\theta_h} V^{\pi,\lambda}_0(s)= \sum_{s \in S} d^{\pi}_h(s)\sum_{a \in A} \nabla_{\theta_h} \mu_h(s,a)Q^{\pi,\lambda}_h(s,a).
\end{split}
\end{equation}

For any baseline \(b(s),s \in S\), it follows that:
\[
\begin{split}
\sum_{s \in S} d^{\pi}_h(s)\sum_{a \in A} \nabla_{\theta_h} \mu_h(s,a) \left( Q^{\pi,\lambda}_h(s,a) - b(s) \right)=\nabla_{\theta_h} L(\pi,\lambda).
\end{split}
\] 
\end{proof}

We approximate all the value functions for the Lagrangian for \(h=0,1,\dots,H\) as \(V^{\pi,\lambda}_h(s) \approx {v^{\pi,\lambda}_h}^T \phi_h(s)\) and all the value functions for the constraint costs \(W^{\pi,(k)}_h(s) \approx {W^{\pi,(k)}_h}^T \phi_h(s)\) where  \(v^{\pi,\lambda}_h:=(v^{\pi,\lambda}_h(l),l=1,...,x_h)^T\) and \(w^{\pi,(k)}_h:=(w^{\pi,(k)}_h(l),l=1,...,x_h)^T\) are the weight vectors and \(\phi_h(s) \in \mathbb{R}^{x_h}\) is the feature vector \(\phi_h(s):=(\phi_h(s)(1),\dots,\phi_h(s)(x_h))^T\) associated with state \(s\) and time instant \(h\). Note that in a given time instant $h$, it is not possible to reach every state in $S$. Hence we define the subset \[S^{\pi}_h=\{s \in S|d^{\pi}_h(s)>0\},\] in other words $S^{\pi}_h$ is the set of states that can be visited at time instant $h$ under policy $\pi$. Since from Assumption~\ref{a1}, $\mu_h(s,a)>0,  \forall (s,a)$, $S^{\pi}_h$ will be the same under any policy $\pi$. Hence we will refer to it as $S_h$. Let \(\Phi_h \in \mathbb{R}^{|S_h| \times x_h}\) denote the feature matrix with $k$th column being \(\phi_h(.)(k)=(\phi_h(s)(k), s\in S_h)^T,h=0,1,\dots,H\). 

\begin{assumption}
\label{a2}
The basis functions \(\{\phi_h(.)(k),k=1,...,x_h\}\) are linearly independent for \(h=0,\dots,H\). Further \(x_h \leq |S_h|\), and $\Phi_h$ has full rank.
\end{assumption}
\begin{remark}
    It appears desirable to have a higher dimension size for the feature matrix. However, one must be careful about Assumption~\ref{a2}, as it is crucial for the stability of the critic recursion (see Section~\ref{convergence}). Note that Assumption~\ref{a2} can be satisfied by slowly increasing the dimension size of both the weight vector and the features as more states are visited over episodes for a time instant $h$ instead of having a large dimension right from the beginning. This is unlike the setting of infinite horizon problems, where the dimension size remains a constant (see \cite{tsitsiklis1997}). 
    
    Note that in infinite horizon problems, under an ergodic policy, the Markov chain settles into a stationary distribution. This is however not the case with finite horizon problems where, in general, only a certain subset of the states will ever be visited depending on the starting state because of the finite nature of the horizon length and the Markov chain does not enter into a stationary distribution. Hence, the probability of visit to many of the states starting from a given state is actually zero in the finite horizon setting. 
\end{remark}
\section{Actor-Critic Algorithm for Finite Horizon Constrained MDPs}
\label{actor-critic}

We present an algorithm based on multi-timescale stochastic approximation (MTSA) for our problem. Let \(a(n),b(n),c(n),n \geq 0\), be three positive step-size schedules that satisfy the following assumption:
\begin{assumption}
\label{a4}
The step-size sequences \(\{a(n)\}, \{b(n)\}, \{c(n)\}\) satisfy the following properties:
\[\sum_n a(n)=\sum_n b(n) = \sum_n c(n) = \infty,\] 
\[\sum_n (a(n)^2+b(n)^2+c(n)^2) <\infty,\] \[\lim_{n \rightarrow \infty} \frac{b(n)}{a(n)} = \lim_{n \rightarrow \infty} \frac{c(n)}{b(n)} = 0.\]

\end{assumption} 

Our algorithm performs updates once after each episode of length \(H\). Note that in our setting, \(H\) is finite and deterministic. For each episode \(n \geq 0\), let \(\theta_h(n), h=0,\dots,H-1\) denote the running updates of the policy parameter \(\theta_h\) and let \(\lambda^{(k)}(n),k=1,...,M\) denote the running updates of the Lagrange multiplier \(\lambda^{(k)}\). Let \(\theta(n)\stackrel{\triangle}{=}(\theta_0(n),\dots,\theta_{H-1}(n))^T\) and \(\lambda(n)=(\lambda^{(1)}(n),\dots,\lambda^{(M)}(n))^T\). At every instant \(h=0,1,\dots,H-1\), action \(a_h(n)\) is sampled from the distribution \(\mu_{\theta_h(n)}(s_h(n),\cdot)\) when the state is \(s_h(n)\). The episode ends in state \(s_H(n)\). 

\begin{algorithm}[tb]\label{algorithm1}
\caption{Constraint Estimation using critic}
\textbf{Input}: step-sizes
\begin{algorithmic}[1]
\FOR{each episode \(n \geq 0 \)}
\STATE Collect the states, actions and rewards from the episode.
\FOR{\(h=0,1,\dots,H-1\)}
\STATE Perform main critic update (\ref{algo1})
\STATE Perform main actor update (\ref{algo3})
\ENDFOR
\STATE Perform main critic update (\ref{algo2})
\FOR{\(k=1,\dots,M\)}
\FOR{\(h=0,1,\dots,H-1\)}
\STATE Perform constraint critic update (\ref{algo4})
\ENDFOR
\STATE Perform constraint critic update (\ref{algo5})
\STATE Perform Lagrange multiplier update (\ref{algo7})
\ENDFOR
\ENDFOR
\end{algorithmic}
\end{algorithm}

We define temporal difference error \(\delta_h(n),h=0,\dots,H-1\) using the Lagrangian as the single-stage reward. The value function weights \(v^{\pi,\lambda}_h\) estimated using \(v_h(n)\), \(h=0,1,\dots,H\) are updated as in \eqref{algo1}-\eqref{algo2} while the policy parameter is updated according to \eqref{algo3}.
\[
\begin{split}
\delta_h(n)=&c^{\lambda(n)}_h(s_h(n),a_h(n),s_{h+1}(n))+v_{h+1}(n)^T\phi_{h+1}(s_{h+1}(n))
-v_h(n)^T\phi_h(s_h(n)),
\end{split}
\]
\begin{equation}\label{algo1}
v_h(n+1)=v_h(n)+a(n)\delta_h(n)\phi_h(s_h(n)),
\end{equation}
\begin{equation}\label{algo2}
\begin{split}
v_H(n+1)=&v_H(n)+a(n)\big[c^{\lambda(n)}_H(s_H(n))-v_H(n)^T\phi_H(s_H(n))\big]\phi_H(s_H(n)).
\end{split}
\end{equation}

In most settings involving parameterized policies, the parameter naturally takes values in a prescribed compact set. Hence, 
we define a projection operator \(\Gamma\), which projects the iterates to a given compact set. The projection has the additional advantage that it ensures  stability of the policy iterates.  Let \(\psi_{\theta_h}(s,a):=\nabla_{\theta_h}\log \mu_{\theta_h}(s,a),s \in S, a \in A\) denote the compatible features (cf.~\cite{sutton1999policy}). The actor update for \(h=0,1,\dots,H-1\) is then  given by
\begin{equation}\label{algo3}
\begin{split}
\theta_h(n+1)=\Gamma \big[\theta_h(n) + b(n) \psi_{\theta_h(n)}(s_h(n),a_h(n))\delta_h(n)\big].
\end{split}
\end{equation}

The updates are justified by the approximate policy gradient equation \eqref{approx-gradient} and the statements that follow. We define the temporal difference error \(\xi^{(k)}_h(n),h=0,\dots,H-1\) using single stage constraint cost for the $k$th constraint \(k=1,\dots,M\). The value function weights \(w^{\pi,(k)}_h\) estimated using \(w^{(k)}_h(n)\), \(h=0,1,\dots,H\) are updated as in \eqref{algo4}-\eqref{algo5}.
\[
\begin{split}
\xi^{(k)}_h(n)=&g^{(k)}_h(s_h(n),a_h(n),s_{h+1}(n))+w^{(k)}_{h+1}(n)^T\phi_{h+1}(s_{h+1}(n))-w^{(k)}_h(n)^T\phi_h(s_h(n)),
\end{split}
\]
\begin{equation}\label{algo4}
w^{(k)}_h(n+1)=w^{(k)}_h(n)+a(n)\xi^{(k)}_h(n)\phi_h(s_h(n)),
\end{equation}
\begin{equation}\label{algo5}
\begin{split}
w^{(k)}_H(n+1)=&w^{(k)}_H(n)+a(n)\big[g^{(k)}_H(s_H(n))-\alpha^{(k)}-w^{(k)}_H(n)^T\phi_H(s_H(n))\big]\phi_H(s_H(n)).
\end{split}
\end{equation}

We define a projection operator \((\cdot)^-:\mathbb{R} \rightarrow [P,0]\) as \((x)^-=\min(0,\max(x,P))\), where \(x \in \mathbb{R}\) and \(-\infty < P\) is a large negative constant. We provide the Lagrange parameter update below.

\begin{equation}\label{algo7}
\begin{split}
\lambda^{(k)}(n+1)=\big(\lambda^{(k)}(n) - c(n)w^{(k)}_0(n)^T\phi_0(s_0(n))\big)^-.
\end{split}
\end{equation}
The update steps are summarized in Algorithm 1.
\section{Convergence Analysis}
\label{convergence}

Both recursions \eqref{algo3} and \eqref{algo7} clearly satisfy the stability requirement, i.e., $\sup_n \|\theta_h(n)\| <\infty,h=0,\dots,H-1$ and $\sup_n \|\lambda^{(k)}(n)\| <\infty,k=1,\dots,M$ almost surely, since the projection operators $\Gamma(\cdot)$ and $(\cdot)^-$ force the iterates to evolve within a compact set. Further, using arguments as in Chapter 6 of \cite{borkar2008}, one may let $\lambda(n) \equiv \lambda$ when analyzing the actor update \eqref{algo3} and $\theta(n) \equiv \theta$ when analyzing the critic update \eqref{algo1}-\eqref{algo2} and \eqref{algo4}-\eqref{algo5} since $c(n)=o(a(n))$ and moreover $b(n)=o(a(n))$ from Assumption~\ref{a4}.

Let \(P^{\theta}_h\) be the probability transition matrix for \(h=0,1,\dots,H-1\) with elements \(P^{\theta}_h(s,s')=\sum_{a \in A} \mu_{\theta_h}(s,a)p_h(s,a,s'),s \in S_h, s' \in S_{h+1}\). Let \(D^{\theta}_h\) denote a diagonal matrix for \(h=0,1,\dots,H\) with elements \(d^{\theta}_h(s), s \in S_h\) along the diagonal. This is unlike the infinite horizon case, where the stationary distribution of all the states in $S$ are taken (see \cite{tsitsiklis1997}).
Let us define the vectors \(C^{\theta,\lambda}_H\), \(C^{\theta,\lambda}_h\), \(G^{\theta,(k)}_H\) and \(G^{\theta,(k)}_h\), \(h=0,\dots,H-1\), \(k=1,\dots,M\), as
\[
C^{\theta,\lambda}_H=\left(c^{\lambda}_H(s), s \in S\right)^T,
\]
\[
C^{\theta,\lambda}_h=\left(\sum_{a \in A} \mu_{\theta_h}(s,a) \sum_{s' \in S} p_h(s,a,s') c^{\lambda}_h(s,a,s'), s \in S\right)^T.
\]
\[
G^{\theta,(k)}_H=\left(g^{(k)}_H(s)-\alpha^{(k)}, s \in S\right)^T,
\]
\[
G^{\theta,(k)}_h=\left(\sum_{a \in A} \mu_{\theta_h}(s,a) \sum_{s' \in S} p_h(s,a,s') g^{(k)}_h(s,a,s'), s \in S\right)^T.
\]
Let us define the following points, which is used in Theorem~\ref{th2}:
\[
\Lambda_H(\theta)=(\Phi_H^T D_H^{\theta}\Phi)^{-1}\Phi_H^T D_H^{\theta}C^{\theta,\lambda}_H,
\]
\[
\Lambda_h(\theta)=(\Phi_h^T D_h^{\theta}\Phi_h)^{-1} (\Phi_h^T D_{h}^{\theta} P_h^{\theta} \Phi_{h+1} \Lambda_{h+1} + \Phi_h^T D_h^{\theta} C^{\theta,\lambda}_h),
\]
\[
\Xi^{(k)}_H(\theta)=(\Phi_H^T D_H^{\theta}\Phi_H)^{-1}\Phi_H^T D_H^{\theta}G^{\theta,(k)}_H,
\]
\[
\Xi^{(k)}_h(\theta)=(\Phi_h^T D_h^{\theta}\Phi_h)^{-1} (\Phi_h^T D_{h}^{\theta} P_h^{\theta} \Phi_{h+1} \Xi^{(k)}_{h+1} + \Phi_h^T D_h^{\theta} G^{\theta,(k)}_h).
\]

\begin{theorem}\label{th2}
For \(\lambda(n) \equiv \lambda\) and \(\theta(n) \equiv \theta\), $v_h(n),h=0,\dots,H-1$ and $v_H(n)$ converge to points \(\Lambda_h(\theta),h=0,\dots,H-1\) and \(\Lambda_H(\theta)\) respectively. Similarly $w^{(k)}_h(n),h=0,\dots,H-1$ and $w^{(k)}_H(n)$ converge to points \(\Xi^{(k)}_h(\theta),h=0,\dots,H-1\) and \(\Xi^{(k)}_H(\theta)\), respectively, for \(k=1,\dots,M\) almost surely. Also the points \(\Lambda_h(\theta),h=0,\dots,H-1\), \(\Lambda_H(\theta)\), \(\Xi^{(k)}_h(\theta),h=0,\dots,H-1\) and \(\Xi^{(k)}_H(\theta)\) for \(k=1,\dots,M\) are Lipschitz continuous in $\theta$ and $\lambda$.
\end{theorem}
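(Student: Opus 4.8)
The plan is to treat the critic recursions \eqref{algo1}--\eqref{algo2} and \eqref{algo4}--\eqref{algo5} as linear stochastic approximation schemes running on the fastest timescale $\{a(n)\}$ and to invoke the ODE method. Since the actor and multiplier are frozen at $\theta(n)\equiv\theta$ and $\lambda(n)\equiv\lambda$ (justified at the start of this section because $b(n),c(n)=o(a(n))$), each episode $n$ generates an independent trajectory $(s_0(n),a_0(n),\dots,s_H(n))$ drawn under the fixed policy, so $s_h(n)$ has law $d^{\theta}_h(\cdot)$ and transitions by $P^{\theta}_h$. First I would stack the weights as $v:=(v_H,v_{H-1},\dots,v_0)$ and write the update as $v(n+1)=v(n)+a(n)\big(A^{\theta}v(n)+b^{\theta,\lambda}+M(n+1)\big)$, where $M(n+1)$ is the gap between the sampled TD direction and its conditional mean. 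Because whole episodes are i.i.d.\ across $n$, $\{M(n)\}$ is a martingale difference sequence with conditionally bounded second moments (features, rewards and frozen parameters are bounded on their compact sets), so the noise hypotheses of the ODE method (Chapter~2 of \cite{borkar2008}) hold.

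Next I would identify the limiting ODE by taking conditional expectations. The mean field for the block $v_h$ ($h<H$) is $\Phi_h^T D^{\theta}_h\big(C^{\theta,\lambda}_h+P^{\theta}_h\Phi_{h+1}v_{h+1}-\Phi_h v_h\big)$ and for $v_H$ it is $\Phi_H^T D^{\theta}_H\big(C^{\theta,\lambda}_H-\Phi_H v_H\big)$; the constraint critics are identical with $G^{\theta,(k)}_h$ replacing $C^{\theta,\lambda}_h$. Setting these to zero and using Assumption~\ref{a2} to invert $\Phi_h^T D^{\theta}_h\Phi_h$ reproduces exactly the recursive definitions of $\Lambda_H,\Lambda_h$ (resp.\ $\Xi^{(k)}_H,\Xi^{(k)}_h$), so the equilibrium of the ODE is precisely the claimed limit.

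The step I expect to be the crux is showing this equilibrium is globally asymptotically stable, and this is where the finite-horizon structure departs from the infinite-horizon TD analysis of \cite{tsitsiklis1997}: there is no single stationary distribution and hence no single weighted-norm contraction. Instead I would exploit the cascade structure. With the ordering $(v_H,\dots,v_0)$, the matrix $A^{\theta}$ is block lower triangular, since $\dot v_h$ depends on $v_h$ through $-\Phi_h^T D^{\theta}_h\Phi_h$ and on $v_{h+1}$ through $\Phi_h^T D^{\theta}_h P^{\theta}_h\Phi_{h+1}$, but on no earlier-time block. Under Assumption~\ref{a2}, $\Phi_h$ has full column rank, and since $\mu_{\theta_h}>0$ (Assumption~\ref{a1}) every $s\in S_h$ has $d^{\theta}_h(s)>0$, so each diagonal block $-\Phi_h^T D^{\theta}_h\Phi_h$ is symmetric negative definite. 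The spectrum of a block-triangular matrix is the union of its diagonal blocks' spectra, so $A^{\theta}$ is Hurwitz; hence $\dot v=A^{\theta}v+b^{\theta,\lambda}$ has the globally asymptotically stable equilibrium $-\,(A^{\theta})^{-1}b^{\theta,\lambda}$, and Hurwitzness also yields $\sup_n\|v(n)\|<\infty$ via the scaled-ODE (Borkar--Meyn) criterion. The ODE method then gives almost sure convergence of $v_h(n)$ to $\Lambda_h$ and of $w^{(k)}_h(n)$ to $\Xi^{(k)}_h$. Equivalently, one can run a backward induction: $v_H(n)\to\Lambda_H$ directly, and given $v_{h+1}(n)\to\Lambda_{h+1}$ the block $v_h$ sees an asymptotically constant forcing term and converges to $\Lambda_h$.

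Finally, for Lipschitz continuity I would argue by backward induction on $h$ from the recursive formulas. The quantities $P^{\theta}_h$, $C^{\theta,\lambda}_h$, $G^{\theta,(k)}_h$ are smooth in $\theta$ through $\mu_{\theta_h}$ (twice continuously differentiable by Assumption~\ref{a1}) and affine in $\lambda$, while $d^{\theta}_h$ is continuously differentiable by Remark~\ref{r1} and, being bounded away from $0$ on the compact parameter set, keeps $\Phi_h^T D^{\theta}_h\Phi_h$ uniformly nonsingular with a $C^1$ inverse. Since $\Lambda_h$ is built from these by products, one matrix inverse, and the already-$C^1$ quantity $\Lambda_{h+1}$ (with base case $\Lambda_H$ likewise $C^1$), it is itself $C^1$ in $(\theta,\lambda)$, and continuous differentiability over the compact set on which the projected iterates $\theta(n),\lambda(n)$ evolve delivers the stated Lipschitz continuity; the argument for $\Xi^{(k)}_h$ is identical.
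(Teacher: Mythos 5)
Your proposal is correct and follows essentially the same route as the paper: the same mean fields $f_h$, the same martingale-noise verification, stability and convergence via the Borkar--Meyn scaled-ODE criterion (the paper's citation of (A1)--(A2) and Theorem 2.2 of \cite{borkar2000}), and Lipschitz continuity of $\Lambda_h,\Xi^{(k)}_h$ from $C^1$ dependence with $d^{\theta}_h$ bounded away from zero on the compact parameter set. The only cosmetic difference is packaging: you stack the blocks and argue $A^{\theta}$ is Hurwitz by block lower-triangularity, whereas the paper verifies the scaled ODEs backwards in $h$ by plugging $v_{h+1}(t)=0$ --- the same cascade structure expressed per block.
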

\begin{proof}
We prove the convergence of \(v_h(n),h=0,\dots,H-1\) and \(v_H(n)\). The convergence of \(w^{(k)}_h(n),h=0,\dots,H-1\) and \(w^{(k)}_H(n)\) for \(k=1,\dots,M\) follows similarly. The recursions \eqref{algo1}-\eqref{algo2} can be written as,
\begin{equation}\label{rec1}
\begin{split}
v_h(n+1)=v_h(n) + a(n)\big[f_h(v_h(n),v_{h+1}(n))+N_h^1(n+1)\big],
\end{split}
\end{equation}
\begin{equation}\label{rec2}
\begin{split}
v_H(n+1)=v_H(n) + a(n)\big[f_H(v_H(n))+N_H^1(n+1)\big]
\end{split}
\end{equation}
$h=0,1,\ldots,H-1$. In the above, 
\[
\begin{split}
f_h(v_h,v_{h+1})=\sum_{s \in S} d^{\theta}_h(s)\sum_{a \in A} \mu_{\theta_h}(s,a) \sum_{s' \in S} p_h(s,a,s')\big[ c^{\lambda}_h(s,a,s') + v_{h+1}^T \phi_{h+1}(s') - v_h^T \phi_h(s) \big] \phi_h(s), 
\end{split}
\]
\[
f_H(v_H)=\sum_{s \in S} d^{\theta}_H(s)\big[c^{\lambda}_H(s)-v_H^T \phi_H(s)\big] \phi_H(s), 
\]
The noise terms can be derived from above. \(f_h\) is linear in \(v_h\) and \(v_{h+1}\) and \(f_H\) is linear in \(v_H\), therefore they are Lipschitz continuous. Let \(\mathcal{F}(n), n\geq 0\) be the increasing sequence of sigma fields
\[
\begin{split}
\mathcal{F}(n)=\sigma(v_h(m),v_H(m),N^1_h(m),N^1_H(m),m \leq n,h=0,1,\dots,H-1).
\end{split}
\]
For some \(0 < C<\infty\), it is easy to see from Assumption~\ref{a2},
\[
\begin{split}
\mathbb{E}\left[||N_h^1(n+1)||^2|\mathcal{F}(n)\right]\leq& C(1+||v_h(n)||^2+||v_{h+1}(n)||^2),h=0,1,\dots,H-1,
\end{split}
\]
\[
\mathbb{E}\left[||N_H^1(n+1)||^2|\mathcal{F}(n)\right]\leq C(1+||v_H(n)||^2).
\]
Let us define:
\begin{equation}\label{limit1}
\begin{split}
f_h^{\infty}(v_h,v_{h+1})&=\lim_{c \rightarrow \infty}\frac{f_h(cv_h,cv_{h+1})}{c}=\Phi_h^T D^{\theta}_h P^{\theta}_h \Phi_{h+1} v_{h+1}-\Phi_h^T D^{\theta}_h \Phi_h v_h,
\end{split}
\end{equation}
\begin{equation}\label{limit2}
f_H^{\infty}(v_H)=\lim_{c \rightarrow \infty} \frac{f_H(cv_H)}{c}=-\Phi_H^T D^{\theta}_H \Phi_H v_H.
\end{equation}
Clearly \(D^{\theta}_h\) is positive definite. Since from Assumption~\ref{a2}, \(\Phi_h\) has full rank then \(-\Phi_h^T D^{\infty}_h \Phi_h\) is negative definite for \(h=0,\dots,H\). Hence the ODE \(\dot{v}_H(t)=f_H^{\infty}(v_H(t))\) has the origin in \(\mathbb{R}^{x_H}\) as its unique asymptotically stable equilibrium. Now plugging $v_{h+1}(t)=0$, one can see going backwards \(h=H-1,\dots,0\), each ODE \(\dot{v}_h(t)=f_h^{\infty}(v_h(t),v_{h+1}(t))\) has the origin in \(\mathbb{R}^{x_h}\) as its unique asymptotically stable equilibrium. Consider the ODEs:
\begin{equation}
\label{finite_ode}
\begin{split}
&\dot{v}_h(t)=f_h(v_h(t),v_{h+1}(t)),h=0,\dots,H-1\\
&\dot{v}_H(t)=f_H(v_H(t))
\end{split}
\end{equation}
We show that $\Lambda_h,h=0,\dots,H$ is the unique globally asymptotically stable equilibrium of \eqref{finite_ode} with 
\[
\begin{split}
&\mathcal{L}_h(v_h,v_{h+1})=\frac{1}{2}f_h(v_h,v_{h+1})^T f_h(v_h,v_{h+1}),h=0,\dots,H-1\\
&\mathcal{L}_H(v_H)=\frac{1}{2}f_H(v_H)^T f_H(v_H)
\end{split}
\]
serving as an associated strict lyapunov function. Thus
\[
\begin{split}
\frac{d\mathcal{L}_h(v_h(t),v_{h+1}(t))}{dt}=\nabla_{v_h} \mathcal{L}_h(v_h,v_{h+1})\dot{v}_h=f_h(v_h,v_{h+1})^T(-\Phi_h^T D^{\theta}_h \Phi_h )f_h(v_h,v_{h+1}) \leq 0,\\
h=0,\dots,H-1\\
\frac{d\mathcal{L}_H(v_H(t))}{dt}=\nabla_{v_H} \mathcal{L}_H(v_H)\dot{v}_H=f_H(v_H)^T(-\Phi_H^T D^{\theta}_H \Phi_H )f_H(v_H)\leq 0.
\end{split}
\]
Hence (A1) and (A2) of \cite{borkar2000} are satisfied and the first claim follows from Theorem 2.2 of \cite{borkar2000}. The iterates $v_h(n)$ converges where the approximate TD error $f_h$ is 0, for $h=0,\dots,H$. 

From Assumption~\ref{a1}, $D^{\theta}_h$ and $C^{\theta,\lambda}_h$ are continuously differentiable in $\theta$ and $C^{\theta,\lambda}_h$ is linear in $\lambda$ for $h=0,\dots,H$. Since $\theta$ and $\lambda$ lie in a compact space, $d^{\theta}_h(s) \geq B_1,B_1>0$, therefore $\nabla_{\theta}(\Phi_h^T D_h^{\theta}\Phi_h)^{-1}$, $\nabla_{\theta}D^{\theta}_h$ and $\nabla_{\theta}C^{\theta,\lambda}_h$ are bounded. Therefore, $\nabla_{\theta}\Lambda_h(\theta)$ is bounded. Similarly one can show $\nabla_{\theta}\Xi_h^{(k)}(\theta)$ is bounded for $h=0,\dots,H$. The second claim follows.
\end{proof}

Let the approximate policy gradient be defined as,
\begin{equation}\label{approx-gradient}
\begin{split}
g_h(\theta)=\sum_{s \in S} d^{\theta}_h(s)\sum_{a \in A} \mu_{\theta_h}(s,a)\nabla_{\theta_h} \log{\mu_{\theta_h}(s,a)}\sum_{s' \in S} p_h(s,a,s')\\
\big(c^{\lambda}_h(s,a,s') + \Lambda_{h+1}(\theta)^T \phi_{h+1}(s') - \Lambda_h(\theta)^T \phi_h(s) \big).
\end{split}
\end{equation}
This is the gradient given in \eqref{gradient}, with baseline $b(s):=V^{\pi,\lambda}_h(s),s \in S$, baring that approximate value functions \(V^{\pi,\lambda}_h(s) \approx {\Lambda_h(\theta)}^T \phi_h(s)\) are used here. The error in gradient due to approximation is defined as:
\[
\mathcal{E}_h^{\theta}=g_h(\theta)-\nabla_{\theta_h}L(\theta,\lambda)
\]
The directional derivative of \(\Gamma(\cdot)\) at point $x$, along the vector $y$ is defined as,
\begin{equation}\label{dd}
\Gamma(x,y)'=\lim_{\eta \downarrow 0} \frac{\Gamma(x+\eta y)-x}{\eta}
\end{equation}
Let us define the set:
\(\kappa(\lambda)=\{(\theta_h \in \mathbb{R}^{y_h},h=0,\dots,H-1)^T|\Gamma(\theta_h,\nabla_{\theta_h}L(\theta,\lambda))'=0\forall h=0,\dots,H-1\}\). This is the set of $\theta$ such that the policy gradient in \eqref{gradient} is 0 $\forall h=0,\dots,H-1$ or at a boundary of the projection set of $\Gamma$, which means $\theta$ is at a bounded local optimum. Let $\kappa^{\epsilon}(\lambda)$ be the closed $\epsilon$-neighborhood of $\kappa(\lambda)$.

\begin{theorem}\label{th3}
For $\lambda(n) \equiv \lambda$, given $\epsilon>0$, $\exists \delta>0$ such that if $\sup_{\theta}||\mathcal{E}_h^{\theta}||\leq\delta$ $\forall h=0,\dots,H-1$ then $\theta(n)$ converges to the set $\kappa^{\epsilon}(\lambda)$ almost surely.
\end{theorem}
\begin{proof}
The recursion \eqref{algo3} can be written as,
\begin{equation}\label{rec3}
\begin{split}
\theta_h(n+1)=\Gamma \big(\theta_h(n) + b(n)\big(g_h(\theta(n))+N_h^2(n+1)+e^{\theta(n)}\big)\big),
\end{split}
\end{equation}
where,
\[
\begin{split}
N^2_h(n+1)=\nabla_{\theta_h} \log{\mu_{\theta_h}(s,a)}\big(c^{\lambda}_h(s,a,s') + \Lambda_{h+1}(\theta(n))^T \phi_{h+1}(s')- \Lambda_h(\theta(n))^T \phi_h(s) \big)-g_h(\theta(n)).
\end{split}
\]
The error $e^{\theta(n)}$ can be derived from above and $e^{\theta(n)}\rightarrow 0$ as $n \rightarrow 0$ (from Theorem~\ref{th2}). It is easy to see \(N^2_h(n+1)\) is uniformly bounded, since $\Lambda_h(\theta)$ and $\Lambda_{h+1}(\theta)$ is uniformly bounded from Theorem~\ref{th2}. Therefore the martingale \(M_1(n)=\sum_{r=0}^{n-1}b(r)N^2_h(r+1)\) converges almost surely (see Appendix C of \cite{borkar2008}). Therefore the tail sum \(M_1'=\sum_{r=n}^{\infty}b(r)N^2_h(r+1) \rightarrow 0\) as \(n \rightarrow \infty\). Now from Theorem~\ref{th2}, $\Lambda_h(\theta)$ and $\Lambda_{h+1}(\theta)$ are both Lipschitz continuous in $\theta$. Since $\theta$ lies in a compact space, from Assumption~\ref{a1}, \(\nabla^2_{\theta} \mu_{\theta_h}(s,a)\) is bounded. Hence $g_h(\theta)$ is lipschitz continuous. Consider the associated ODEs for $h=0,\dots,H-1$:
\begin{equation}\label{finite_actor_ode1}
\dot{\theta}_h(t)=\Gamma(\theta_h(t),g_h(\theta(t)))'
\end{equation}
\begin{equation}\label{finite_actor_ode2}
\dot{\theta}_h(t)=\Gamma(\theta_h(t),\nabla_{\theta_h}L(\theta(t),\lambda))'
\end{equation}
Therefore by Theorem 5.3.1 of \cite{kushner1978}, \(\theta_h(n)\) asymptotically tracks the ODE \eqref{finite_actor_ode1}. Clearly $\nabla^2_{\theta} L(\theta,\lambda)$ is bounded hence $\nabla_{\theta_h}L(\theta,\lambda)$ is Lipschitz continuous. If $\sup_\theta||\mathcal{E}_h^{\theta}||\leq\delta$ $\forall h=0,\dots,H-1$ then, $\theta_h(n)$ is a perturbed trajectory of \eqref{finite_actor_ode2}. Now $\kappa(\lambda)$ is the set of stable equilibria of \eqref{finite_actor_ode2}. The claim follows from Theorem 1 of \cite{hirsch1989}.
\end{proof}

Let \((\cdot)^{-'}\) be the directional derivative of \((\cdot)^-\) defined as \eqref{dd}. Let us define the set:
\(F=\{\lambda \in [P,0]^M|\big(\lambda,\sum_{s_0 \in S} \beta(s_0){\Xi_0^{(k)}(\theta^\lambda)}^T \phi_0(s_0)\big)^{-'}=0, \forall k=1,\dots,M,\theta^\lambda \in \kappa^\epsilon(\lambda)\}\). This set contains all the lagrange parameter $\lambda$ for which the approximate constraint violation is 0. Note that the point $\Xi_0^{(k)}$ implicitly depends on $\lambda$.

\begin{theorem}\label{th4}
As $n\rightarrow\infty$, \(\lambda(n)=(\lambda^{(1)}(n),\ldots,\lambda^{(M)}(n))^T\) obtained from (\ref{algo7}) converges to a set $F$ almost surely.
\end{theorem}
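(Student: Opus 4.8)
The plan is to analyze \eqref{algo7} as a projected stochastic approximation on the slowest timescale, mirroring the argument already used for the actor in Theorem~\ref{th3}. By Assumption~\ref{a4} we have $c(n)=o(b(n))$ and $b(n)=o(a(n))$, so from the vantage point of the $\lambda$-iterates the faster recursions appear equilibrated: the critic weights $w^{(k)}_0(n)$ have reached their limit $\Xi^{(k)}_0$ (Theorem~\ref{th2}) and the actor $\theta(n)$ is quasi-static at a point of $\bigcup_h\kappa_h$ (Theorem~\ref{th3}). First I would make this rigorous through the nested asymptotic-equilibrium argument of Chapter~6 of \cite{borkar2008}, holding $\theta(n)\equiv\theta$ and treating $w^{(k)}_0(n)$ as already converged when forming the mean field for $\lambda$.

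Next I would cast \eqref{algo7} in the standard form
\[
\lambda^{(k)}(n+1)=\Big(\lambda^{(k)}(n)-c(n)\big[h^{(k)}(\lambda(n))+M^{(k)}(n+1)+\varepsilon^{(k)}(n)\big]\Big)^-,
\]
with mean field $h^{(k)}(\lambda):=\sum_{s_0\in S}\beta(s_0){\Xi^{(k)}_0}^T\phi_0(s_0)$, the approximate constraint violation; martingale-difference noise $M^{(k)}(n+1):={\Xi^{(k)}_0}^T\phi_0(s_0(n))-h^{(k)}(\lambda(n))$ arising from the fresh sampling $s_0(n)\sim\beta$; and residual $\varepsilon^{(k)}(n):=\big(w^{(k)}_0(n)-\Xi^{(k)}_0\big)^T\phi_0(s_0(n))$. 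Since the single-stage costs are bounded on the finite space $S\times A$ and the features $\phi_0$ are bounded, $M^{(k)}(n+1)$ is uniformly bounded; hence, by square-summability of $\{c(n)\}$, the martingale $\sum_{r=0}^{n-1}c(r)M^{(k)}(r+1)$ converges a.s.\ and its tail vanishes (Appendix~C of \cite{borkar2008}). The residual $\varepsilon^{(k)}(n)\to 0$ a.s.\ follows from $w^{(k)}_0(n)\to\Xi^{(k)}_0$ in Theorem~\ref{th2} together with the boundedness of $\phi_0$.

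With the noise and error terms disposed of, I would identify the limiting dynamics as the projected mean-field ODE associated with the drift $-h^{(k)}$, whose set of equilibria — the points at which the projected direction vanishes — is by construction exactly $F$. Lipschitz continuity of $\Xi^{(k)}_0$ in $\theta$ and $\lambda$ (Theorem~\ref{th2}) makes $h^{(k)}$ Lipschitz, so the ODE is well posed and its trajectories remain in $[P,0]^M$; the projection $(\cdot)^-$ also supplies the a.s.\ boundedness $\sup_n|\lambda^{(k)}(n)|<\infty$ noted at the start of Section~\ref{convergence}. Invoking Theorem~5.3.1 of \cite{kushner1978} then yields $\lambda(n)\to F$ almost surely.

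The step I expect to be most delicate is the simultaneous handling of all three timescales inside the residual $\varepsilon^{(k)}(n)$: one must argue that $w^{(k)}_0(n)$ tracks $\Xi^{(k)}_0$ evaluated at the \emph{instantaneous} pair $(\theta(n),\lambda(n))$ even while $\theta(n)$ is still drifting on the intermediate timescale, which calls for the full nested two-timescale separation rather than a single freezing step. A second subtlety is that Theorem~\ref{th3} only guarantees that $\theta(n)$ converges to a set $\kappa_h$, so $\Xi^{(k)}_0$ and hence $h^{(k)}$ must be read along the convergent behaviour of the actor; one should verify that the limiting ODE description of $\lambda$ is unaffected by which point of $\kappa_h$ the actor settles near, or else restrict attention to convergent subsequences.
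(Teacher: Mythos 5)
Your proposal is correct and follows essentially the same route as the paper: the paper likewise rewrites \eqref{algo7} as a noisy Euler discretization of the projected ODE $\dot{\lambda}^{(k)}(t)=\big(\sum_{s_0 \in S}\beta(s_0){\Xi_0^{(k)}}^T\phi_0(s_0)\big)^{-'}$ with a vanishing error term $\gamma^{(k)}(n)$ supplied by Theorem~\ref{th2}, and concludes by a standard projected stochastic-approximation result (Theorem 2, Chapter 2 of \cite{borkar2008}, third extension, rather than your Kushner--Clark citation, which serves the same purpose). Your explicit martingale/residual decomposition and your caveats about the nested three-timescale tracking and the merely set-valued limit of $\theta(n)$ are left implicit in the paper's proof, so your writeup is, if anything, more careful than the original.
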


\begin{proof}
The Lagrange parameter update (\ref{algo7}) can be rewritten as,
\begin{equation}\label{lrec}
\begin{split}
\lambda^{(k)}(n+1)=\big(\lambda^{(k)}(n)-c(n)\big({\Xi_0^{(k)}(\theta^\lambda)}^T \phi_0(s_0(n))+\gamma^{(k)}(n)\big)\big)^-
\end{split}
\end{equation}
where \(\gamma^{(k)}(n)\) can be derived from above and \(\gamma^{(k)}(n) \rightarrow 0\) as \(n \rightarrow \infty\) (from Theorem~\ref{th2}). Thus (\ref{lrec}) is an Euler discretization with (non-uniform) step-sizes $c(n)$ of the ODE:
\begin{equation}\label{finite_lag_ode}
\dot{\lambda}^{(k)}(t)=\big(-\sum_{s_0 \in S} \beta(s_0){\Xi_0^{(k)}(\theta^{\lambda(t)})}^T \phi_0(s_0)\big)^{-'}.
\end{equation}
Now $F$ is the set of stable equilibria of \eqref{finite_lag_ode} using the lyapunov function $\max_\theta\sum_{s_0 \in S} \beta(s_0){\Lambda_0}(\theta)^T \phi_0(s_0)$. The claim follows from Theorem 5.3.1 of \cite{kushner1978}.
\end{proof}

Hence by Theorem~\ref{th2}-\ref{th4}, the algorithm converges asymptotically.
\section{Experiments}
\label{numerical}
\begin{figure*}
\centering
\includegraphics[width=1\textwidth,height=6cm]{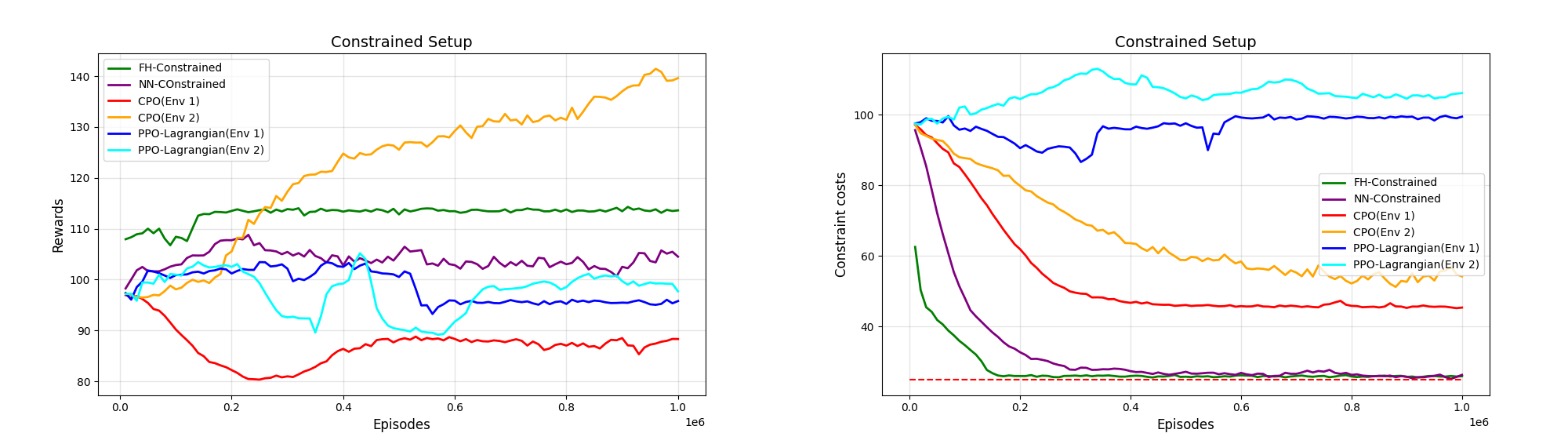}
\caption{Comparison of rewards and constraint costs achieved by different algorithms.}
\label{fig2}
\end{figure*}
Now that we have proved the convergence of our algorithm, we study the empirical performance of our algorithm on a randomly generated 2-dimensional Grid World Problem. We also introduce a deep learning version of our algorithm, where the value and policy function are parameterized using a neural network architecture. Along each dimension, at any instant, the agent can go one step forward or one step back, or else stay in the same place. So for a 2-dimensional grid, there are available $3 \times 3=9$ actions in each state. As far as state transitions are concerned, there is a \(90 \%\) chance that the agent will go one step in the direction suggested by the action and a \(10\%\) chance that it will randomly move one step in any of the other directions. The agent collects a reward if it goes to a particular state. We have a single constraint. There are bad states, which the agent must avoid and there is a constraint cost associated with those states. The position of the rewards and constraints change with time, so the agent needs to make decisions at each time instant accordingly. This kind of setup can be applied in applications such as taxi services, where a driver needs to maximize his/her fare in a day comprising of say \(10\) hours and the grid is of the city in which he/she is driving. The rewards would then correspond to the fares and the constraints could be on the level of traffic congestion encountered on the different routes. The congestion levels are clearly dynamic in nature as they would naturally change with time.

 We let a horizon length of \(H=100\) for our experiments. Because of our constrained setup, our objective is to maximize the rewards while satisfying the inequality constraint. We name our finite horizon constrained algorithms as FH-Constrained. We also name our corresponding deep learning algorithms as NN-Constrained. In NN-Constrained, we use fully connected feed-forward neural network for both critic and actor. Each network has two hidden layers, 10 neurons per layer, tanh activation function, and 2-dimensional state space as input. 
 
 For NN-Constrained, we generate batches of $K$ trajectories using the current policy $\pi(n)=\{\mu_0(n),\dots,\\\mu_{H-1}(n)\}$. The data for $i$-th trajectory of $n$-th batch is of the form:$s_0^i(n),a_0^i(n),c_0^i(n),g_0^i(n),s_1^i(n),\dots$ which are state, action, lagrangian reward (equivalent to $c^\lambda_h$), constraint cost, next state and so on. The critic loss(CL), constraint critic loss(CCL) and actor loss(AL) for $n$-th batch of data are the following:
 \[
 \begin{split}
 &CL(v_h)=\frac{1}{K}\sum_{i=1}^K(c_h(n)+V_{h+1}(s_{h+1}^i(n))|_{v_{h+1}=v_{h+1}(n)}-V_h(s_h^i(n)))^2,h=0,\dots,H-1\\
 &CL(v_H)=\frac{1}{K}\sum_{i=1}^K(c_H(n)-V_H(s_H^i(n)))^2\\
 &CCL(w_h)=\frac{1}{K}\sum_{i=1}^K(g_h(n)+W_{h+1}(s_{h+1}^i(n))|_{w_{h+1}=w_{h+1}(n)}-W_h(s_h^i(n)))^2,h=0,\dots,H-1\\
 &CCL(w_H)=\frac{1}{K}\sum_{i=1}^K(g_H(n)-W_H(s_H^i(n)))^2
 \end{split}
 \]
 \[
 \begin{split}
 AL(\theta_h)=\frac{1}{K}\sum_{i=1}^K\log{\mu_h(s^i_h(n),a^i_h(n))}(c_h(n)+V_{h+1}(s_{h+1}^i(n))|_{v_{h+1}=v_{h+1}(n)}-V_h(s_h^i(n))|_{v_h=v_h(n)}),\\
 h=0,\dots,H-1
 \end{split}
 \]
 Other implementation details can be found in the code.
 We compare our algorithm with Constrained Policy Optimization (CPO) \cite{achiam2017} and PPO-Lagrangian \cite{ray2019}, which are the best known algorithms for Constrained RL. Each comparison is done on two environments, one with $2$-dimensional state space as input and another with $3$-dimensional input (two dimensions for the state variable and one dimension for the time instant). %We enumerate the two experiments as (1) and (2) respectively. 
 The first plot in Fig.1 is for the aggregate reward while the second is for the constraint cost. We take the total reward/constraint cost of each episode and average the same over the last 10,000 episodes. Each setting is run 5 times with independent seeds. The red dotted line in the second plot denotes the constraint threshold of \(\alpha=25\).

  We can see that our algorithm performs better than both CPO and PPO-Lagrangian as it nearly satisfies the constraint every time while others do not. Also note that CPO (Env 2) gets more rewards than our algorithms but that comes at a cost of it violating the constraints by a wide margin. It is clear from the experiments that constrained algorithms for infinite horizon problems are not appropriate for finite horizon settings. Also feeding the time instant with state as input does not  help in these algorithms. In fact, they perform poorly when the reward and the constraint cost change with time, unlike our algorithm that easily adapts to such changes.

\section{Conclusions}
\label{conclusions}

We presented the first policy gradient reinforcement learning algorithm for Finite Horizon Constrained MDPs. One must adhere to our algorithm when the agent needs to take time critical decisions, which is evident from our empirical comparisons with other well known algorithms. Our algorithm involves three timescale schedules and is of the actor-critic type. We provided the full asymptotic convergence analysis for our algorithm. The power of our algorithm comes from using separate parametric functions for each time instant, and our convergence analysis shows that they can successfully interact with each other to learn a time critical task. It will be of interest to analyze the sample complexity of our constrained three-timescale finite horizon MDP algorithm. Further, it will be interesting to perform a finite sample analysis of Actor critic algorithms for Finite Horzion CMDPs as future work. One may also come up with sophisticated policy optimization methods such as CPO \cite{achiam2017}, for finite horizon MDPs.
\bibliographystyle{IEEEtran}
\bibliography{IEEEfull}

\end{document}